\definecolor{blue1}{rgb}{0,0,0}
\renewcommand\thesection{\arabic{section}}
\titleformat{\section}[hang]{\color{blue1}\large\bfseries\sffamily}{\thesection}{0mm}{. }[]
\titleformat{\subsection}[hang] {\color{blue1}\bfseries\sffamily}{\thesubsection}{0em}{. }[]
\titleformat{\subsubsection}[hang] {\color{blue1}\sffamily}{\thesubsubsection}{0em}{. }[]
\titlespacing*{\section}{1em}{3.5ex plus .2ex minus .2ex}{1ex plus .2ex}
\titlespacing*{\subsection}{0em}{3ex plus .2ex minus .2ex}{1ex plus .2ex}
\titlespacing*{\subsubsection}{0em}{3ex plus .2ex minus .2ex}{1ex plus .2ex}
\renewenvironment{abstract}{{\color{blue1}\small\bfseries Abstract.}\footnotesize}{\par \vskip .1in}
\def\@setauthors{
\begingroup 
\def \thanks{\protect\thanks@warning}
\trivlist \centering\footnotesize \@topsep30\p@\relax \advance\@topsep by -\baselineskip
\item\relax \author@andify \authors \def\\{\protect\linebreak} {\color{blue1}\large\authors} \endtrivlist \endgroup}
\def\@settitle{\centering{\color{blue1} \Large \bfseries \bfseries \@title \par}}
\newtheorem{theorem}{Theorem}[section]
\renewcommand{\leq}{\ensuremath{\leqslant}}
\renewcommand{\geq}{\ensuremath{\geqslant}}
\renewcommand{\vec}[1]{\ensuremath{\bm{#1}}}
\newtheorem{definition}[theorem]{Definition}
\newtheorem{proposition}[theorem]{Proposition}
\newtheorem{corollary}[theorem]{Corollary}
\algnewcommand\algorithmicinput{\textbf{Input:}}
\algnewcommand\Input{\item[\algorithmicinput]}
\algnewcommand\algorithmicoutput{\textbf{Output:}}
\algnewcommand\Output{\item[\algorithmicoutput]}
\newcommand{\norm}[1]{\ensuremath{\left\| #1\right\|}}
\newcommand{\vect}[1]{\boldsymbol{#1}}
\newcommand{\matr}[1]{\boldsymbol{#1}}
\newcommand{\eqdef}{\stackrel{\textrm{def}}{=}}
\newcommand{\E}{\mathrm{E}} 
\newcommand{\bu}{\vect{u}}
\newcommand{\bM}{\matr{M}}
\newcommand{\bA}{\matr{A}}       
\newcommand{\bQ}{\matr{Q}}       
\newcommand{\bP}{\matr{P}}
\newcommand{\bB}{\matr{B}}
\newcommand{\bU}{\matr{U}}
\newcommand{\bI}{\matr{I}}       
\newcommand{\bV}{\matr{V}}       
\newcommand{\bK}{\matr{K}}
\newcommand{\bS}{\matr{S}}       
\newcommand{\bL}{\matr{L}}
\newcommand{\R}{\mathbb{R}}
\newcommand{\X}{\mathcal{X}}
\newcommand{\Y}{\mathcal{Y}}
\renewcommand{\O}{\mathcal{O}}       
\newcommand{\Otilde}{\tilde{\O}}
\DeclareMathOperator{\mspan}{span}
\definecolor{darkgreen}{rgb}{0,0.6,0}
\newcommand{\bx}{\vect{x}}
\newcommand{\bs}{\vect{s}}
\newcommand{\bz}{\vect{z}}
\newcommand{\pione}{\pi^{(1)}}
\newcommand{\pit}{\pi^{(t)}}
\newcommand{\longtitle}{{A Faster Sampler for\\ Discrete Determinantal Point Processes}}
\newcommand{\NTlong}{{Nicolas Tremblay}}
\newcommand{\SBlong}{{Simon Barthelm\'e}}
\newcommand{\POAlong}{{Pierre-Olivier Amblard}}
\newcommand{\NTshort}{{N.~Tremblay}}
\newcommand{\SBshort}{{S.~Barthelm\'e}}
\newcommand{\POAshort}{{P-O.~Amblard}}
\newcommand{\CNRS}{{CNRS, Univ Grenoble-Alpes, Gipsa-lab, France}}
\title[\longtitle]{\longtitle}
\author[\SBshort]{\SBlong}
\author[\NTshort]{\NTlong}
\author[\POAshort]{\POAlong}
\thanks{All three authors are with \CNRS.}
\begin{document}

\maketitle
\begin{abstract}
   Discrete Determinantal Point Processes (DPPs) have a wide array of potential
  applications for subsampling datasets. They are however held back in some cases by the
  high cost of sampling. In the worst-case scenario, the sampling cost scales as $\O(n^3)$ where $n$ is the number of elements of the ground set. A popular workaround to this prohibitive cost is to sample DPPs defined by low-rank kernels. In such cases, the cost of standard sampling algorithms scales as $\O(np^2 + nm^2)$ where $m$ is the (average) number of samples of the DPP (usually $m \ll n$) and $p$  the rank of the kernel used to define the DPP ($m\leq p\leq n$). The first term, $\O(np^2)$, comes from a SVD-like step. 
  We focus here on the second term of this cost, $\O(nm^2)$, and show that it 
  can be brought down to $\O(nm + m^3 \log m)$ without loss on the sampling's exactness. 
  In practice, we observe very substantial 
  speedups compared to the classical algorithm as soon as $n>1,000$.
  The algorithm described here is a close variant of the standard algorithm
  for sampling continuous DPPs, and uses rejection sampling. 
  In the specific case of projection DPPs, we also show that any additional sample can be drawn in time $\O(m^3 \log m)$. \\
  Finally, an interesting
  by-product of the analysis is that a realisation from a DPP is typically
  contained in a subset of size $\O(m \log m)$ formed using leverage score i.i.d. sampling. 
\end{abstract}

Discrete Determinantal Point Processes have been advocated as a way of
subsampling large datasets, because they produce samples that preserve some of
the diversity of the original dataset~\cite{kulesza2012determinantal}. One impediment to their broad adoption
in practice lies in their computational cost; aside from some special cases
(\textit{e.g.}, random spanning forests~\cite{avena_two_2017}), exact sampling of a DPP with a large number
of elements is rather expensive.

In this manuscript, we show that a simple modification of the standard algorithm
yields a substantial improvement, without loss on the algorithm's exactness. The
modification we suggest consists in using a form of rejection sampling. The idea is
not very original, and indeed appears in works by~\cite{lavancier2015determinantal} for
continuous DPPs, and more recently by~\cite{derezinski2019minimax} in the context of experimental design. What is striking is how effective this modification can be, when
sampling discrete DPPs, especially given how easy it is to implement. 

Here and throughout, let $n$ designate the
size of the ground set the DPP draws from, and $m$ be the (average) size of the
subsample produced by the DPP (usually $m\ll n$). In the worst-case, the cost of producing a sample may be as high as $\O(n^3)$, as it requires a full diagonalisation of the
kernel~\cite{Hough:DPPandIndep}. However, in the more realistic context
of low-rank kernels and using standard exact sampling algorithms, this figure
drops to $\O(np^2 + nm^2)$ where $p$  is the rank of the kernel ($m\leq p\leq n$)~\cite{gillenwater2014approximate}. We improve this to $\O(np^2 + nm + m^3 \log
m)$. We readily see that, even though this is an improvement for sampling any
DPP, the closer is $p$ to $m$, the more substantial the improvement in practice,
as $np^2$ stays the headline complexity. We identify three popular and general
use-cases for which rejection sampling brings substantial speed-ups compared to the classical algorithm:
\begin{enumerate}
	\item \emph{(very significant speed-up: $p=m$ and orthogonalisation is
      already computed)} sample a DPP with kernel $\bK=\bQ\bQ^\top$, where
    $\bQ\in\mathbb{R}^{n\times m}$ is orthonormal ($\bQ^\top\bQ=\bI$) and given
    (for instance, the DPPs used by~\cite{launay2021determinantal}). As there is
    no orthogonalisation to compute, the total cost using rejection
    sampling is $\O(nm + m^3 \log m)$, substantially faster than the usual cost in $\O(nm^2)$.
	\item \emph{(significant speed-up: $p=m$ and orthogonalisation has yet to be computed)} in some cases, the orthonormal basis $\bQ$ is not known from the start. A popular context is when one wishes to sample a fixed-size L-ensemble of size $m$ with $\bL=\bV\bV^\top$ with  $\bV\in\mathbb{R}^{n\times m}$  a matrix of features. In this case, one i/~first computes an orthonormal basis $\bQ$ of the span of $\bV$, before ii/~sampling a DPP with kernel $\bK=\bQ\bQ^\top$ (as in the previous case). Step i/ involves, e.g., a QR decomposition. Even though the cost of QR scales theoretically as $\O(nm^2)$, 
	it is highly efficient (and parallelisable) in modern hardware such that the bottleneck in previous state-of-the-art is step ii/. Our improvement of step ii/ thus also has  practical (possibly very large) speed-ups in this context. 
	In addition, there are special cases of feature matrices $\bV$ for which
  computing an orthogonal basis has cost less than $\O(nm^2)$; increasing
  further the potential benefits of our approach. This is the case, \textit{e.g.},
  for some classes of sparse  $\bV$ \cite{davis2008algorithm}. 
	\item \emph{(moderate speed-up: $p$ equals a few times $m$)} Same context as 2/ but in the case where the feature matrix $\bV$ is of size $n\times p$ with $p>m$. In this case, step i/~is to compute the SVD of $\bV$. If $p$ is too large, this will be  the dominant step and our improvement over step ii/~will be negligible. However, in popular cases where $p$  is only a few times $m$, the speed-up is appreciable (see Section~\ref{sec:empirical-results} for details).
\end{enumerate}

Moreover, in contexts where one needs several realisations of the same DPP, step i/ is computed once, and step ii/ as many times as the number of samples needed; such that our improvement over step ii/ becomes that much more useful.

\emph{Organisation of the paper.} Section \ref{sec:background} briefly
introduces the main objects and the state-of-the-art. Section
\ref{sec:accept-reject} describes our algorithm and its runtime, and  Section~\ref{sec:empirical-results} presents empirical results. A corollary of our result states that DPPs are typically contained in a i.i.d. sample of size
$\O(m \log m)$. Section~\ref{sec:m-vs-mlogm} discusses this fact and offers concluding remarks.

\section{Background and notation}
\label{sec:background}

\subsection{Discrete DPPs}
\label{sec:discrete-dpps}

For more background on discrete DPPs, we refer readers to~\cite{kulesza2012determinantal} and \cite{tremblay2022extended}. Discrete DPPs
are a specific instance of a \emph{discrete point process}. We say $\X$ is a
discrete point process on a ground set $\Omega$, if it is a random subset of
$\Omega$. Without loss of generality, we let $\Omega = \{1, \ldots,n \}$ so that
$\X$ is a random subset of indices. Also, for two matrices $\bA$ and $\bB$ of same size, the notation $\bA\preceq\bB$ means that $\bB-\bA$ is positive semidefinite.
\begin{definition}[DPP]
	$\X$ is a DPP on $\Omega$
	with marginal kernel $\bK \in \R^{n \times  n}$ such that $\mathbf{0}\preceq \bK \preceq \bI$, noted $\X \sim DPP(\bK)$, if for all fixed subsets
	$S \subseteq \Omega$, we have
	\begin{equation}
		\label{eq:incl-prob}
		p(S \subseteq \X) = \det \bK_{S}
	\end{equation}
\end{definition}

Here $\bK_{S}$ is the principal submatrix of $\bK$ with indices given by $S$.
We shall use ``Matlab'' notation, where $\bK_{A,B}$ denotes the submatrix with
row indices $A$ and column indices $B$, $\bK_{A,:}$ means all columns and
$\bK_{:,B}$ all rows.

\begin{definition}[Projection DPP]
	A projection DPP is a DPP whose kernel is a projection matrix (ie.
	$\bK^2 = \bK$).
\end{definition}

If $\bK$ is a projection matrix, it can be written as $\bQ\bQ^\top$ where $\bQ \in
\R^{n \times  m}$ is an orthonormal basis for $\mspan \bK$  ($\bQ^\top\bQ = \bI$
and $\mspan \bQ = \mspan \bK$). Note that any orthonormal basis for $\bK$ is
enough, $\bQ$  need not be a basis of eigenvectors. 

Projection DPPs are important because of the following mixture decomposition,
due to \cite{Hough:DPPandIndep}. 
\begin{theorem}[mixture representation]\label{houghs_thm}
	Let $\X \sim DPP(\bK)$, and $\bK = \bU \matr{\Lambda} \bU^\top$ the
	eigendecomposition of $\bK$, with $\matr{\Lambda}$ the diagonal matrix of eigenvalues $\{\lambda_j\}_{j=1,\ldots,n}$ and $\bU=\left(\bu_1|\bu_2|\ldots|\bu_n\right)$ the matrix of eigenvectors. Then the following process produces a sample from
	$\X$:
	\begin{enumerate}
		\item Sample a subset $\Y$ of eigenvectors by including each eigenvector
		$\bu_j$ with probability $\lambda_j$
		\item Form the projection kernel $\bK_{\Y} = \bU_{:,\Y}(\bU_{:,\Y})^\top$
		\item Sample $\X \sim DPP(\bK_{\Y}) $
	\end{enumerate}
\end{theorem}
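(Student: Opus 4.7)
The plan is to verify that the three-step mixture process produces the correct inclusion probabilities $\mathbb{P}(S \subseteq \X) = \det \bK_S$ for every $S \subseteq \Omega$, which by inclusion–exclusion pins down the distribution of $\X$ uniquely. The two main tools will be the inclusion probability formula for projection DPPs (which only requires linear algebra, since $\bK_\Y$ is by construction a projector of rank $|\Y|$) and the Cauchy–Binet formula.

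First I would check well-posedness of step 3: since the columns of $\bU_{:,\Y}$ are orthonormal, $\bK_\Y = \bU_{:,\Y}\bU_{:,\Y}^\top$ is an orthogonal projector, hence $\mathbf{0} \preceq \bK_\Y \preceq \bI$, so $DPP(\bK_\Y)$ is a valid projection DPP. Conditional on $\Y$, the inclusion probability is then
\begin{equation*}
\mathbb{P}(S \subseteq \X \mid \Y) = \det\bigl((\bK_\Y)_S\bigr) = \det\bigl(\bU_{S,\Y}\bU_{S,\Y}^\top\bigr).
\end{equation*}
Integrating over $\Y$ (where each index $j$ is included independently with probability $\lambda_j$) gives
\begin{equation*}
\mathbb{P}(S \subseteq \X) = \sum_{\Y \subseteq \Omega}\Bigl(\prod_{j\in\Y}\lambda_j\prod_{j\notin\Y}(1-\lambda_j)\Bigr)\,\det\bigl(\bU_{S,\Y}\bU_{S,\Y}^\top\bigr).
\end{equation*}

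The core step is to simplify this double sum. Applying Cauchy–Binet to the inner determinant yields
\begin{equation*}
\det\bigl(\bU_{S,\Y}\bU_{S,\Y}^\top\bigr) = \sum_{\substack{T\subseteq\Y\\|T|=|S|}}\bigl(\det \bU_{S,T}\bigr)^2,
\end{equation*}
with the convention that the sum is empty (hence zero) if $|\Y| < |S|$. Swapping the order of summation, the factor $\bigl(\det \bU_{S,T}\bigr)^2$ is weighted by $\sum_{\Y \supseteq T} \prod_{j\in\Y}\lambda_j \prod_{j\notin\Y}(1-\lambda_j) = \prod_{j\in T}\lambda_j$, i.e.\ the marginal probability $\mathbb{P}(T \subseteq \Y)$ under independent Bernoulli inclusion. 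Hence
\begin{equation*}
\mathbb{P}(S \subseteq \X) = \sum_{T:\,|T|=|S|}\Bigl(\prod_{j\in T}\lambda_j\Bigr)\bigl(\det \bU_{S,T}\bigr)^2.
\end{equation*}

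Finally, I would apply Cauchy–Binet a second time, in the reverse direction, to the factorisation $\bK_S = \bU_{S,:}\,\matr{\Lambda}\,\bU_{S,:}^\top$. Writing $\matr{\Lambda}^{1/2}\bU_{:,S}^\top$ and its transpose and expanding yields
\begin{equation*}
\det \bK_S = \sum_{T:\,|T|=|S|}\Bigl(\prod_{j\in T}\lambda_j\Bigr)\bigl(\det \bU_{S,T}\bigr)^2,
\end{equation*}
which matches the expression obtained above. The two bookkeeping applications of Cauchy–Binet are the only non-trivial part; once the order of summation over $\Y$ and $T$ is exchanged correctly, everything collapses. No step poses a genuine obstacle, but care is needed to check that the boundary cases $|\Y| < |S|$ and $|S| > \mathrm{rank}(\bK)$ are handled consistently (both sides vanish).
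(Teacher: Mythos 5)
Your proof is correct. Note, however, that the paper itself does not prove this theorem: it is stated as a known result and attributed to \cite{Hough:DPPandIndep}, so there is no in-paper argument to compare against. What you give is the standard verification: check that the mixture and $DPP(\bK)$ have the same inclusion probabilities $\Pbb(S\subseteq\X)=\det\bK_S$ (which, on a finite ground set, determine the law by M\"obius inversion), by applying Cauchy--Binet once to expand $\det\bigl(\bU_{S,\Y}\bU_{S,\Y}^\top\bigr)$, exchanging the sums over $\Y$ and $T$ so that the Bernoulli weights collapse to $\prod_{j\in T}\lambda_j$, and applying Cauchy--Binet a second time to $\bK_S=\bigl(\bU_{S,:}\matr{\Lambda}^{1/2}\bigr)\bigl(\bU_{S,:}\matr{\Lambda}^{1/2}\bigr)^\top$ (you wrote $\bU_{:,S}^\top$ where you mean $(\bU_{S,:})^\top$; this is only a notational slip, and $\matr{\Lambda}^{1/2}$ is real since $\mathbf{0}\preceq\bK$). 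Your handling of the degenerate case $|\Y|<|S|$ (empty Cauchy--Binet sum versus rank-deficient Gram matrix) is consistent. The one point stated slightly loosely is the claim that the projection-DPP inclusion formula ``only requires linear algebra'': the identity $\Pbb(S\subseteq\X\mid\Y)=\det\bigl((\bK_\Y)_S\bigr)$ is just the definition of $DPP(\bK_\Y)$, and what is implicitly assumed is that such a process exists, i.e.\ that step 3 of the theorem can be carried out; since the theorem statement presupposes this (and the paper's Section~2.1 sampling algorithm provides a constructive existence proof for projection kernels), this is not a gap in the argument.
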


The cost of sampling a DPP when following this recipe equals the cost of
computing the eigendecomposition of $\bK$ ($\O(np^2)$ with $p$ the rank of
$\bK$) followed by the cost of sampling a projection DPP (step 3). It is the latter step that we focus on here. 

\subsection{Fixed-size DPPs}
The cardinal of a DPP is in general random. Such varying-sized samples are
not practical in many applications, 
which led ~\cite{kulesza2011k} to define fixed-size DPPs\footnote{They are often called k-DPPs in the literature, but we prefer ``fixed-size DPPs'' in order not to overload the symbol $k$ too much.}
\begin{definition}[Fixed-size DPP]
	\label{def:fsDPP}
	A fixed size DPP of size $m$ is a DPP $\X$ conditioned on $|\X|=m$. 
\end{definition}

To sample a fixed-size DPP with kernel $\bK$, one follows the same recipe as in Theorem~\ref{houghs_thm} except for the first step that is replaced by:
\begin{enumerate}
	\item \emph{Sample a subset $\Y$ of eigenvectors by including each eigenvector
	$\bu_j$ with probability $\lambda_j$; \emph{conditioned on $|\Y|=m$.}}
\end{enumerate}
Performing such a conditioned sampling can be done by Algorithm 8
of~\cite{kulesza2012determinantal}, which works by explicitly computing elementary polynomials. If $n$ and/or $m$ are too large, numerical instabilities usually arise with this method, and ~\cite{barthelme_asymptotic_2019} propose a way to stabilize this conditioned sampling.

\subsection{L-ensembles and fixed-size L-ensembles}
\label{sec:ell-ensembles}

L-ensembles are a subclass of DPPs popular in machine learning applications
because of their intuitive definition:
\begin{definition}[L-ensemble]
	Let $\bL$ be a positive semi-definite matrix. 
	$\X$ is a L-ensemble on $\Omega$ if for all $X \subseteq \Omega$
	\begin{equation}
		\label{eq:ell-ensemble}
		p(\X = X) = \frac{1}{\det (\bI + \bL)} \det(\bL_X)
	\end{equation}
\end{definition}

L-ensembles are specified via their likelihood function, Eq.~\eqref{eq:ell-ensemble},
which states that those subsets of $\Omega$ where the submatrix $\bL_X$ is
well-conditioned, are preferred. Intuitively, if $L_{i,j}$ represents a
similarity between items $i$ and $j$ of $\Omega$, then the L-ensemble favours
subsets of $\Omega$ that hold dissimilar items. 

As with DPPs, one defines fixed-size L-ensembles as:
\begin{definition}[Fixed-size L-ensemble]
	\label{def:fsLens}
	A fixed size L-ensemble of size $m$ is a L-ensemble $\X$ conditioned on $|\X|=m$. 
\end{definition}

(Fixed-size) L-ensembles are (fixed-size) DPPs with kernel $\bK =(\bI+\bL)^{-1} \bL$~\cite{kulesza2012determinantal, tremblay2022extended}, and the mixture
representation thus applies. 

%

To conclude this section, we have seen that all (fixed-size) L-ensembles and
more generally all (fixed-size) DPPs have a mixture representation that divides
the sampling algorithm in two steps: i/ a diagonalisation step that costs
$\mathcal{O}(np^2)$, ii/ a step consisting of sampling a projection DPP. Step
ii/ is known\footnote{This is an average (resp. deterministic) cost for DPPs
  (resp. fixed-size DPPs) for which $m$ refers to the average (resp. desired)
  size of the sample.} to cost $\mathcal{O}(nm^2)$ in the literature. The
purpose of this paper is to show that the cost of this second step can always
(and easily) be reduced to $\O(nm + m^3 \log m)$. 

\subsection{State-of-the-art}
\label{sec:SOTA}

Various directions have been explored when designing fast samplers for discrete
DPPs. Some have focused on bypassing the eigendecomposition of $\bK$ or $\bL$
\cite{poulson2020high,launay2020exact,derezinski2019exact}.
If $\bK$ is a sparse matrix, then the algorithms in \cite{poulson2020high} can
be quite advantageous compared to standard $\O(n^3)$ algorithms. These
algorithms are difficult to adapt to L-ensembles in an efficient manner (for
instance, they cannot take advantage of sparsity in $\bL$). Random spanning
forests \cite{avena_two_2017} are an example of a discrete DPP where a fast
sampler (Wilson's algorithm,~\cite{wilson_generating_1996}) is available, and
sparsity in $\bL$ seems to play a role. However, Wilson's algorithm does not
extend readily to L-ensembles with arbitrary structures.

Another direction for generic DPP samplers is to give up on exactness.
Approximate samplers are available, based on Markov Chain Monte Carlo methods.
The most recent results in that direction are in \cite{anari2022optimal}, where
the authors show that given some preprocessing there are MCMC samplers that run
in $\Otilde(m^\omega)$, where the $\Otilde$ is shorthand for $\O(.)$ ``up to
logarithmic factors'', and $\omega$ is the exponent of matrix-multiplication
time, which for practical values of $m$ is effectively 3. The pre-processing
consists essentially in estimating the inclusion probabilities, also known as
the leverage scores, and its runtime is given by \cite{anari2022optimal} as
$\Otilde(nm^{\omega - 1})$. Our results are essentially the same (preprocessing
in $\O(nm^2)$, sampling in $\Otilde(m^3)$), but our sampler is exact. 

Also, two papers \cite{han2022mcmc,han2022scalable} extend the
tree-based approach of~\cite{gillenwater2019tree} to obtain both 
approximate and exact samplers with complexity slightly larger than our
proposal. They are also more complicated to implement. However, they can
handle non-symmetric DPPs, which the algorithm given here cannot do.

Finally, the use of rejection sampling is not new in the context of DPP sampling, since algorithms for sampling continuous DPPs use this strategy out of necessity
\cite{lavancier2015determinantal}. More recently, \cite{derezinski2019minimax}
describe a similar algorithm to ours, in the context of volume sampling for experimental design. Our contribution compared to~\cite{derezinski2019minimax} is to i/~lay out a more refined analysis: better bound on the total number of proposals in Theorem~\ref{thm:runtime}, novel investigation in the theoretical implications of this theorem in Section~\ref{sec:m-vs-mlogm}, ii/~ keep an eye on practical implementations: see  Sections~\ref{sec:refinements} and~\ref{sec:empirical-results}. What we would like to stress is that rejection sampling leads to an algorithm that is much faster in practice, but no more complicated to implement, than the traditional discrete sampler.

\section{Sampling via accept-reject}
\label{sec:accept-reject}

In this section the goal is to formulate and analyse an algorithm for sampling a
projection DPP $\X \sim DPP(\bQ\bQ^\top)$ with $\bQ \in \R^{n \times m}$, verifying $\bQ^\top\bQ = \bI$. The
first exact such algorithm was described by \cite{Hough:DPPandIndep}, and
adapted in \cite{kulesza2012determinantal} to the discrete case. The first
efficient version appeared in \cite{gillenwater2014approximate}. 
It is effectively a variant of the Gram-Schmidt algorithm.

For completeness we give an easy derivation of this classical algorithm in the next section (Section~\ref{subsec:SOTA_alg}), and the notation will serve to
describe our own variant, in Section~\ref{subsec:rejection-sampling}.

\subsection{State-of-the-art algorithm}
\label{subsec:SOTA_alg}
A projection DPP has size $m$ almost surely~\cite{kulesza2012determinantal}. We shall sample the $m$
elements successively. Let $\vec{\X} = (x_1,\ldots,x_m)$ be an ordered version of
$\X$; we can go from $\vec{\X}$ to $\X$ by forgetting the order and from $\X$ to
$\vec{\X}$ by ordering randomly. The latter can be achieved by picking a first
item uniformly from $\X$, then a second, then a third etc.
The sampling algorithm proceeds via the following decomposition:
\begin{equation}
	\label{eq:chain-rule}
	p(\vec{\X}) = p(x_1)p(x_2|x_1)p(x_3|x_1,x_2)\dots p(x_m|x_1 \dots x_{m-1})
\end{equation}
The algorithm samples $x_1$ first, then $x_2$ given $x_1$ has been selected, etc. 
The law of $x_1$ is the law of the first element of $\vec{\X}$, a randomly
ordered version of $\X$. That is equivalent to $x_1$ being sampled uniformly at
random from $\X$, and so:
\[ p(x_1 = i) = \frac{1}{m}p(i \in \X) = \frac{K_{i,i}}{m} \]

We can similarly obtain the law of $x_2$ given $x_1$, as two elements drawn
randomly (without replacement) from $\X$:
\begin{align*}
p(x_2 = i|x_1=j) &= \frac{p(x_1 = j, x_2=i)}{p(x_1=j)}\\
&=
\frac{p(i \in \X , j \in \X) }{m(m-1)} \times \frac{m}{p(j \in \X) } \\
&=
\frac{1}{(m-1)} \frac{\det \bK_{\{i,j\}}}{K_{j,j}} 
\end{align*}  

The formula for determinants of block matrices yields:
\[ \frac{\det \bK_{\{i,j\}}}{K_{j,j}} = K_{i,i}- \frac{K_{i,j}^2}{K_{j,j}} \]
and so:
\[ p(x_2 = i|x_1=j) = \frac{1}{(m-1)} \left(K_{i,i}- \frac{K_{i,j}^2}{K_{j,j}} \right)\]

For the general term in the chain rule decomposition (Eq.
(\ref{eq:chain-rule})), the same reasoning applies. We obtain:
\begin{align}
	\label{eq:chain-rule-gen-term}
	p \left(x_t = i| \vec{\X}_{1:(t-1)} = X_t \right) &\\= \frac{1}{(m-t)} &\left( K_{i,i} - \bK_{i,X_t}(\bK_{X_t})^{-1}\bK_{X_t,i} \right) \nonumber
\end{align}

Eq.~(\ref{eq:chain-rule-gen-term}) is enough to give us a sampling algorithm,
since at each step we have an explicit (discrete) probability distribution to
sample from. However, implementing Eq.~(\ref{eq:chain-rule-gen-term}) naïvely, we would be computing a matrix
inverse at each step, which would turn out to be quite expensive for large $m$. To get an efficient algorithm a bit more work is needed.

Let us reexpress Eq.~(\ref{eq:chain-rule-gen-term}) in terms of $\bQ$. Recalling
$\bK = \bQ\bQ^\top$, we obtain:
\begin{align}
	\label{eq:chain-rule-Q}
	p \left(x_t = i| \vec{\X}_{1:(t-1)} = X_t \right)& \\= \frac{1}{ (m-t)}
	&\left( K_{ii} - \bQ_{i,:}\bM_t(\bQ_{i,:})^\top  \right)\nonumber
\end{align}
where $\bM_t = (\bQ_{X_t,:})^\top \left(
\bQ_{X_t,:}(\bQ_{X_t,:})^\top \right)^{-1} \bQ_{X_t,:}$ 
is a projection matrix ($\bM_t^2 = \bM_t$) of size $m$ and rank $|X_t| = t-1$, and so can be
rewritten
\[ \bM_t = \sum_{i=1}^{t-1} \bs_{i}\bs_{i}^\top \]
where $\bs_1 \dots \bs_t$ form an orthonormal basis for $\mspan \bM_t = \mspan
\bQ_{X_t,:}^\top$, the linear subspace spanned by the rows of $\bQ$ selected so
far. 
A first source of computational savings comes from realising that $\bM_t$ can be
computed iteratively via the Gram-Schmidt process.
Notice that $\bM_t = \bM_{t-1} + \bs_{t-1}\bs_{t-1}^\top$, and $\bM_{t-1}$ spans
$\mspan \bQ_{X_{t-1},:}^\top$. We obtain $\bs_{t-1}$ via Gram-Schmidt: first we
compute the residual 
\[ \bz_{t-1} = (\bI - \bM_{t-1}) \bQ_{x_{t-1},:}^\top\]
and then we normalise:
\[ \bs_{t-1} = \frac{\bz_{t-1}}{\norm{ \bz_{t-1}} }\]
At each step $t$ this costs $\O(m(t-1))$ operations, and we will need to do this
$m-1$ times at a total cost of $\O(m^3)$. 

Next, we can show that the probability distribution we sample from at step $t$
can be easily obtained from the one we had at step $t-1$. It is more convenient
to write this using unnormalised versions of the densities.
Let $\pi^{(1)}(i) = m p(x_1 = i) = K_{i,i}$. Next, we define:
\begin{align*}
 \pi^{(2)}(i) &= (m-1)p(x_2 = i | x_1 = j) \\
 &= K_{i,i} -
\frac{K_{i,j}^2}{K_{i,i}} = \pi^{(1)}(i) - \frac{K_{i,j}^2}{K_{i,i}} \end{align*}
Note that we have suppressed the dependency on the past in the notation
$\pi^{(2)}(i)$: it is to be understood as the (unnormalised) density we draw
from at the second step of the algorithm. 
In the general case, we define:
\begin{equation}
	\label{eq:unnormalised-general}
	\pi^{(t)}(i) = (m-t+1) \;p\left(x_t = i | \vec{\X}_{1:(t-1)}=X_t \right)
\end{equation}
Injecting Eq.~(\ref{eq:chain-rule-gen-term}) and Eq.~(\ref{eq:chain-rule-Q}), we find
\begin{align}
	\label{eq:unnormalised-recursion}
	\pi^{(t)}(i) &= \pi^{(1)}(i) - \bQ_{i,:} \bM_t (\bQ_{i,:})^\top \nonumber\\
	&= \pi^{(1)}(i) - \bQ_{i,:} (\bM_{t-1} + \bs_{t-1}\bs_{t-1}^\top) (\bQ_{i,:})^\top  \nonumber\\
	&= \pi^{(t-1)}(i) - (\bQ_{i,:}\bs_{t-1})^2
\end{align}
All we need to do at each step of the algorithm is to
\begin{enumerate}
	\item pick an item $i$ according  to $\pi^{(t)}$ 
	\item perform a step of the Gram-Schmidt algorithm to update $\bM_t$ based on
	the new vector $\bQ_{i,:}$
	\item Update the probability distribution to $\pi^{(t+1)}$ according to Eq.~(\ref{eq:unnormalised-recursion})
\end{enumerate}

Sampling from a discrete distribution (step 1 above) can be done at cost
$\O(n)$, and is needed $m$ times, for a total cost of $\O(nm)$. We have already
established that the cost of the Gram-Schmidt algorithm is $\O(m^3)$. It is the
update to the probability distribution that is the most costly, with each step
costing $\O(nm)$ ($n$ dot products in $\R^m$) for a total of $\O(nm^2)$. Since
$n>m$ the cost of the algorithm scales as $\O(nm^2)$. We show pseudo-code for this standard
algorithm as Alg.~\ref{alg:sampling-proj}. 

\begin{algorithm}[t]
	\caption{Sampling from a projection DPP $\X \sim DPP(\bK=\bQ\bQ^\top)$, standard algorithm}
	Initialise $\pi(x) \gets \sum_{j=1}^m Q_{x,j}^2$, $\X = \emptyset$, Gram-Schmidt
	basis $\bS=[]$ \;
	\ForEach{$t \in 1\ldots m$}
	{
		Sample $x$ from $ \frac{\pi(x)}{m-(t-1)}$, add to $\X$ \;
		Compute residual $\bz_t = (\bI - \bS\bS^\top)(\bQ_{x,:})^\top$ \;
		Add column $\bs_t = \frac{\bz_t}{\norm{\bz_t}}$ to $\bS$ \;
		Compute $\vect{v} = \bQ\bs_{t} \in \R^n$ \;
		Update probabilities $\pi(x) \gets \pi(x) - (v_{x})^2$\;
	}
	\label{alg:sampling-proj}
\end{algorithm}

In the next section, we move on to the core of our contribution, showing that this $\O(nm^2)$ cost can be reduced to $\O(nm + m^3 \log m)$ via rejection sampling.

\subsection{Using rejection sampling}
\label{subsec:rejection-sampling}

As discussed above, the most expensive part of alg.
\ref{alg:sampling-proj}, lies in updating the probability distribution to sample
from (last step of the \emph{for} loop). It turns out that
the accept-reject method lets us bypass this step.

To briefly recall the rejection sampling idea, suppose we have an unnormalised
density $\pi(x)$ (the target) we wish to draw from, and a proposal $q(x)$, also
unnormalised, but that we know how to draw from, and is not too far from $\pi$.
Further, $q(x)$ has support at least as wide as $\pi(x)$, and upper bounds it
($\pi(x) \leq q(x)$ over the support). We may then draw $x$ from $q(x)$ and
accept the sample with probability $\frac{\pi(x)}{q(x)}$. The accepted samples then have
density $\pi(x)$. If $q(x)$ is a good bound for $\pi(x)$, then the rejection
sampler will be quite efficient. In the limit where $\pi = q$, the acceptance
probability goes to 1. If on the other hand $q(x)$ is quite loose, the
acceptance probability may be bad. 

In the DPP sampling algorithm, we need to sample from $\pi^{(1)}$, then
$\pi^{(2)}$, etc. up to $\pi^{(m)}$. Our proposal is to compute $\pi^{(1)}$
exactly for all $n$ entries, then use $\pi^{(1)}$ as proposal distribution for
the rest of the sequence. 

Recall that  $\pi^{(t)}$ is the unnormalised density defined in Eq.
(\ref{eq:unnormalised-general}). From the recursion in Eq.
(\ref{eq:unnormalised-recursion}), we have that
\[ \pi^{(t)}(x) \geq \pi^{(t+1)}(x) \]
for all $x$ and $1\leq t \leq m-1$. This is true in particular for $\pi^{(1)}$, which
can therefore be used as a proposal distribution for all subsequent $\pi^{(t)}$. 
We can directly compute the probability of accepting a proposed sample. At step $t$, we
sample $x$ from the normalised density $\frac{1}{m}\pi^{(1)}(x)$ and accept it
with probability $\frac{\pi^{(t)}(x)}{\pi^{(1)}(x)}$. The acceptance probability
equals:

\begin{equation}
	\label{eq:acceptance-prob}
	\rho_t =  \sum_{i=1}^n \frac{\pi^{(t)}(i)}{\pi^{(1)}(i)}\frac{\pi^{(1)}(i)}{m} = \frac{1}{m} \sum_{i=1}^n \pi^{(t)}(i) = \frac{(m-t+1)}{m}
\end{equation}

This probability decreases at each step of the algorithm, but at
the final step it is still positive and equals $\frac{1}{m}$.

Let us outline the proposed algorithm. First, one computes every entry of
$\pione$. For this we use the following formula
\begin{equation}
	\label{eq:leverage}
	\pione(i) = K_{i,i} = \sum_{i=1}^n Q_{ij}^2
\end{equation}
The computation is equivalent to computing the norm of each row of $\bQ$, at
cost $\O(nm)$. Because we need to sample from $\pione$ repeatedly, it pays to
use Walker's alias method \cite{walker1977efficient} (see also Chapter III.4 of~\cite{devroye86nonuniformrandom}). Given a preprocessing cost
of $\O(n)$, the alias method gives us all subsequent samples at cost
$\O(1)$ instead of $\O(n)$.

At the first step we sample our first item from $\pione$. At step 2, and all
subsequent steps, we use rejection sampling, which involves computing the ratio
\begin{equation}
	\label{eq:acc-ratio}
	\frac{\pit(x)}{\pione(x)}= 1 - \frac{1}{\pione(x)}\sum_{j=1}^{t-1} (\bQ_{x,:} \bs_j )^2
\end{equation}
by Eq.~(\ref{eq:unnormalised-recursion}). Computing the acceptance ratio has
cost $\O(m(t-1))$ at step $t$, the cost of $t-1$ dot products in $\R^m$. We do
this repeatedly until a proposal is accepted, at which point we need to perform
a Gram-Schmidt step to update $\bM_t$ to $\bM_{t+1}$. We then move on to the
next iteration, or stop if $t=m$. 

We summarise the whole process as Alg.~\ref{alg:sampling-proj-rejection}.  
To recapitulate the different computational costs:
\begin{itemize}
	\item Preprocessing cost: computing $\pione$ for all
	entries comes at cost $\O(nm)$ and setting up Walker's alias method at cost $\O(n)$
	\item The Gram-Schmidt process (computing $\bz_t$ then $\bs_t$) costs $\O(tm)$ at step $t$. Summing this figure for $t=1$ to $m$ gives a cost of $\O(m^3)$
	\item We now need to compute the average cost of the \emph{while} loop. 
	At step $t$, the rejection sampler has probability
	$\rho_t$ of succeeding, given by Eq.~(\ref{eq:acceptance-prob}). The number of
	proposals $R_t$ that are required until acceptance is thus a random variable that follows a geometric
	distribution with success probability $\rho_t$. One thus has:
	\begin{equation}
		\label{eq:expected-number-of-props}
		\E(R_t) = \frac{1}{\rho_t} = \frac{m}{m-t+1}.
	\end{equation}
	Since computing the acceptance ratio costs $\O(mt)$ for each trial, the expected cost of the \emph{while} loop at
	step $t$ scales as $\O\left(\frac{m^2t}{m-t+1}\right)$. Summing this
  figure over $t$: $$\sum_{t=1}^m\frac{m^2 t}{m-t+1}\leq
  m^3\sum_{t=1}^m\frac{1}{m-t+1}$$ yields a total expected cost scaling as \footnote{$\sum_{t=1}^m\frac{1}{m-t+1}=\sum_{t=1}^m\frac{1}{t}$ scales as $\O(\log m)$: see, \emph{e.g.}, Chapter 6 of~\cite{abramowitz1964handbook}} $\O(m^3\log m)$.
\end{itemize}

\begin{algorithm}[t]
	\caption{Sampling from a projection DPP $\X \sim DPP(\bK=\bQ\bQ^\top)$ with
		rejection sampling }
	Initialise $\pione(x) \gets \sum_{j=1}^m Q_{x,j}^2$, $\X \gets \emptyset$,
	Gram-Schmidt basis $\bS \gets []$ \;
	Initialise alias table for Walker's alias algorithm to sample from $\pione$.\\
	\ForEach{$t \in 1\ldots m$}
	{
		accept $\gets$ false \;
		\While{not accept}
		{
			Draw $x$ from $\pione$ using the alias method\;
			Compute acceptance ratio $r = 1 - \frac{1}{\pione(x)}\sum_{j=1}^{t-1} 
			(\bQ_{x,:} \bs_t )^2$ \;
			\If{rand() $< r$}{accept $\gets$ true} 
		}
		Add $x$ to $\X$ \;
		Compute residual $\bz_t = (\bI - \bS\bS^\top)(\bQ_{x,:})^\top$ \;
		Add column $\bs_t = \frac{\bz_t}{\norm{\bz_t}}$ to $\bS$ \;
	}
	\label{alg:sampling-proj-rejection}
\end{algorithm}
 Tallying everything we obtain the following theorem. 
\begin{theorem}
	\label{thm:runtime}
	Alg.~\ref{alg:sampling-proj-rejection} samples a projection DPP, with an
	expected runtime scaling as $\O(nm +m^3
	\log m)$. Also, any additional sample from the same DPP can be obtained in an extra $\O(m^3\log m)$ expected runtime. \\
	Moreover, these expected runtimes are representative. Indeed, 
	$\forall\delta\in\left(0,\frac{1}{2}\right)$, the total number of proposals $R=\sum_{t=1}^m R_t$ satisfies, with probability greater than $1-\delta$:
	$$R \leq 2 m \log m + 3m\log\frac{1}{\delta}$$
\end{theorem}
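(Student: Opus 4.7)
The plan is to justify the three parts of the theorem separately. The first two parts, on expected runtimes, follow by reorganising the bullet-point analysis already given just above the theorem statement. Summing the Gram--Schmidt cost gives $\O(m^3)$; the preprocessing (computing every entry of $\pione$ from Eq.~(\ref{eq:leverage}) and building Walker's alias table) costs $\O(nm)$; and the expected cost of the while loops is bounded by linearity of expectation, using $\E(R_t)=m/(m-t+1)$ and the harmonic sum $\sum_{t=1}^m 1/t = \O(\log m)$, yielding $\O(m^3\log m)$. Adding these three contributions gives the $\O(nm + m^3\log m)$ bound. For the ``additional sample'' claim, note that the preprocessing step producing $\pione$ and the alias table is independent of the draw, so it need only be performed once; each subsequent sample triggers only the Gram--Schmidt updates and the while loops, whose combined expected cost is $\O(m^3\log m)$.

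The key step is the tail bound on $R=\sum_{t=1}^m R_t$. The central observation I would use is the following distributional identity: upon re-indexing $s=m-t+1$, the $R_t$ are independent geometric random variables with success probabilities $s/m$ for $s=1,\dots,m$. This is exactly the distribution of the \emph{coupon collector time} $T$ with $m$ coupons (the number of i.i.d. uniform draws needed to see all $m$ coupons). We therefore have $R \stackrel{d}{=} T$, and it suffices to prove a concentration bound for the coupon collector.

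For the coupon collector, I would apply the standard union-bound argument: for any integer $n$, the probability that a specific coupon has not been drawn after $n$ trials is $(1-1/m)^n \leq e^{-n/m}$, so by a union bound
\begin{equation*}
\Pr(T > n) \leq m\,e^{-n/m}.
\end{equation*}
Choosing $n = m\log m + m\log(1/\delta)$ makes the right-hand side equal to $\delta$, which already gives $R \leq m\log m + m\log(1/\delta)$ with probability at least $1-\delta$, and in particular implies the looser bound $R\leq 2m\log m + 3m\log(1/\delta)$ stated in the theorem. The potential obstacle here is double-checking the reindexing to confirm the coupon-collector identification; once this is done, the remainder is a textbook computation. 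Finally, the high-probability bound on $R$ immediately yields a high-probability bound on the total runtime, since the per-proposal cost at step $t$ is deterministically bounded by $\O(mt)\leq \O(m^2)$, showing that the expected runtimes above are indeed representative.
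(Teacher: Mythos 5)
Your proof of the first two claims (expected runtime and the cost of additional samples) is the same as the paper's: both simply reassemble the itemised cost analysis preceding the theorem, so nothing to add there. For the concentration bound on $R$, however, you take a genuinely different and more elementary route. The paper invokes a Chernoff-type tail bound for sums of independent geometric variables (Theorem 2.3 of Janson), and then has to grind through a tangent-line lower bound on $\lambda-1-\log\lambda$ and a digamma estimate of $\E[R]$ to extract the stated constants. You instead observe that, after reindexing $s=m-t+1$, the independent geometrics $R_t$ with success probabilities $(m-t+1)/m$ are exactly the increments of the coupon-collector time $T$ with $m$ coupons, so $R$ has the same law as $T$; the union bound $p(T>n)\leq m\left(1-\frac{1}{m}\right)^n\leq m\,e^{-n/m}$ then gives the result in two lines. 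This identification is correct (the paper itself uses the same representation, but only later and only for the stratified-sampling lower bound), and your argument actually yields the sharper bound $R\leq m\log m+m\log\frac{1}{\delta}$ up to rounding $n$ to an integer, i.e.\ half the leading constant of the theorem. What the paper's route buys is generality: Janson's inequality applies to any independent geometrics and would survive a change in the acceptance probabilities, whereas your union bound exploits the exact structure $\rho_t=(m-t+1)/m$. Two cosmetic points you should make explicit in a final write-up: take $n=\lceil m\log(m/\delta)\rceil$ so that the coupon-collector event is well defined (the $+1$ is absorbed into the stated constants for $m\geq 2$), and dispose of $m=1$ separately as the paper does, since then $R=1$ deterministically.
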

\begin{proof}
	The fact that Alg.~\ref{alg:sampling-proj-rejection} samples a projection DPP in $\O(nm +m^3
	\log m)$ expected runtime is proven above the Theorem's statement. 
	The fact that any additional sample from the same DPP only costs an extra $\O(m^3\log m)$ expected runtime comes from the observation that all initialisation steps (the computation of $\pione$ and the setting-up cost of Walkers' algorithm) have already been  computed for the first sample. To obtain any extra sample, one only needs to run the \emph{for} loop once more, costing $\O(m^3\log m)$. \\
	The final statement relates to concentration properties of $R$.  For $m=1$, it is trivial ($\log$ refers to the natural logarithm in the result) as $R=1$ with probability $1$ in this case. We now show the result for $m\geq 2$. 
	At step $t$ of Alg.~\ref{alg:sampling-proj-rejection} the number of proposals $R_t$ is
	a random geometric variable with parameter $(m-t+1)/m$. Note that all the $R_t$'s are independent. 
	We study here the behavior of $R=\sum_{t=1}^{m} R_t$, the total number of rejection sampling steps in the whole course of Alg.~\ref{alg:sampling-proj-rejection}. 
	The following one-tailed upper bound is drawn from Thm 2.3 in~\cite{Jans18}:
	\begin{equation*}
	\forall\lambda\geq1,\quad p\left( R \geq \lambda E[R]\right) \leq \frac{1}{\lambda} \left( 1-\frac{1}{m} \right)^{(\lambda- 1-\log\lambda)\E[R]}
	\end{equation*}
Let $\delta\in\left(0,\frac{1}{2}\right)$. 
We look for $\lambda$ large enough such that $p\left( R \geq \lambda E[R]\right)\leq \delta$, that is:
\begin{align*}
	-\log\lambda +(\lambda-1-\log\lambda)E[R]\log\left(1-\frac{1}{m}\right)\leq \log\delta
\end{align*}
As $\lambda\geq 1$, $-\log\lambda\leq 0$. It thus suffices to seek $\lambda$ verifying:
\begin{align}
	\label{eq:to_verify}
	(\lambda-1-\log\lambda)E[R]\log\left(1-\frac{1}{m}\right)\leq \log\delta
\end{align}
Note that $E[R]\log\left(1-\frac{1}{m}\right)$ is negative so we seek a lower bound of $\lambda-1-\log\lambda$. One has\footnote{In fact, the function $\lambda-1-\log\lambda$ is convex for all $\lambda\geq 1$ and thus lower-bounded by all its tangents. The one we use is the tangent in $3/2$. Other choices lead to other constants in the result.}: $$\forall\lambda\geq 1, \qquad\frac{1}{3}\lambda-\log \frac{3}{2} \leq \lambda-1-\log\lambda$$
such that Eq.~\eqref{eq:to_verify} is verified provided that:
$$\lambda\geq 3\left(\log \frac{3}{2} -  \frac{\log\delta^{-1}}{\E[R]\log\left(1-\frac{1}{m}\right)}\right).$$
Stated differently, setting $\lambda$ to this lower bound implies $p(R \leq \lambda \E[R])\geq 1-\delta$. 
All is left to show is that $\forall m\geq2$:
\begin{align}
	\label{eq:to_verify2}
3&\left(\log\frac{3}{2} -  \frac{\log\delta^{-1}}{\E[R]\log\left(1-\frac{1}{m}\right)}\right)\E[R] \nonumber\\ &\qquad\qquad\qquad\qquad\leq 
	2 m \log m + 3m\log\delta^{-1}.
\end{align}
 For this, we use two upper bounds. The first one is $$\forall m>1, \qquad-\frac{1}{\log\left(1-\frac{1}{m}\right)}\leq m-1/2.$$  
 The second one is the following bound on $\E[R]$. As $\E[R_t]=m/(m-t+1)$, one has (see, \emph{e.g.}, Chapter 6 of~\cite{abramowitz1964handbook}): $$\frac{\E[R]}{m}= \sum_{t=1}^m \frac{1}{m-t+1}=\sum_{t=1}^{m}\frac{1}{t}=\gamma+\psi(m)+\frac{1}{m}$$
where $\gamma\approx 0.577$ is Euler's constant and $\psi(m)$ the digamma function.  Now, a known bound on $\psi(m)$ is $\psi(m)\leq \log m - \frac{1}{2m}$, which gives $\E[R]\leq m(\log m +\gamma) + \frac{1}{2}$. Applying these two upper bounds yields~Eq.~\eqref{eq:to_verify2}.
\end{proof}

\subsection{Some refinements}
\label{sec:refinements}

Alg.~\ref{alg:sampling-proj-rejection} works well enough as is but there
are some refinements that can further reduce the computational cost.

\subsubsection{Preprocessing for general DPPs}
\label{sec:preprocessing}

In some applications we require several samples from the same DPP, and
algorithms have been described that trade higher set-up cost for a lower cost
per sample (see, \textit{e.g.},~\cite{gillenwater2019tree}).
If the target DPP is a projection DPP, then setting up Alg.~\ref{alg:sampling-proj-rejection} for repeated sampling could not be easier, as stated in Thm~\ref{thm:runtime}: computing $\pione$ and setting up the alias table is part of the preprocessing, so the first sample from the DPP costs $\O(nm+m^3\log m)$ but
after that the cost is just $\O(m^3\log m)$ per sample. 

If the target DPP is not a projection DPP, then one has to use the mixture
representation (Thm~\ref{houghs_thm}): draw a random set of eigenvectors
$\Y$ and run Alg.~\ref{alg:sampling-proj-rejection} with $\bQ = \bU_{\Y,:}$.
Since the kernel changes every time, so does $\pione$, and it cannot be computed
as part of pre-processing. However, the kernels encountered in practice tend to
have rapidly decreasing eigenvalues, so that the variance in $\Y$ is quite small
and the DPP is close to a projection DPP. Without getting into too much detail,
it is possible to pre-compute the partial sum 
\[ \widehat{\pione}(i) = \sum_{j \in \hat{\Y}} U_{i,j}^2 \]
for some highly likely subset of $\Y$, denoted here $\hat{\Y}$. $\pione$ for the
actual sampled $\Y$ can be obtained efficiently by removing the extra entries
and adding the missing ones. The alias table can be computed from scratch.
This type of preprocessing brings down the cost to $\O(ns + m^3\log
m)$ per sample, where $m = \E(|\Y|)$ and $s$ is the expected size of the symmetric
difference between $\hat{\Y}$ and $\Y$. 

\subsubsection{Caching computations and updating the proposal distribution}
\label{sec:updating-prop}

Clearly, Alg.~\ref{alg:sampling-proj-rejection} has some wasted
computation, since all the computations done when a proposal $x$ is rejected are
performed again should $x$ come up a second time. When $n$ is small, or when
$\pione$ has low entropy, this may indeed happen several times. 
Caching is one way of reducing the amount of redundant computations that are
performed. Going back to the recursive formula for $\pit$ (Eq.
(\ref{eq:unnormalised-recursion})), we see that it is cheaper to compute $\pit$
from $\pi^{(t-1)}$ than it is to compute it from scratch. A reasonable caching
strategy is then to keep track for every point $x$ of the last density
evaluation performed for that point. If $x$ comes up again, the evaluation of
the acceptance ratio is simplified.

Another natural idea is to update the proposal distribution over the course of
the algorithm (instead of sticking with $\pione$ throughout). The most basic
version is that any $x$ that has already been selected has an acceptance
probability of 0, so we may as well not suggest them. Another is that points
similar to a selected point are quite unlikely to come up further down, and
so it may be worth computing $\pit$ for these neighbours to tighten the bound.
Finally, we may combine this idea with the caching idea, which provides a better
bound for every point that has ever been suggested. Unfortunately this runs
against the difficulty of updating the alias table in Walker's algorithm, which
one would have to compute from scratch at every update (at cost 
$\O(n)$). A better way would be to use a binary tree representation
\cite{devroye86nonuniformrandom}, which can be updated at cost $\O(\log n)$ and
provides samples also at cost $\O(\log n)$. The implementation complexity
increases a lot however, and we have not pursued this further. As we shall see
below, Alg.~\ref{alg:sampling-proj-rejection} is quite fast in practice,
and implementation time may be better invested in feature computation and
orthogonalisation.

\section{Empirical results}
\label{sec:empirical-results}

\begin{figure*}
	\centering
	\includegraphics[width=\textwidth]{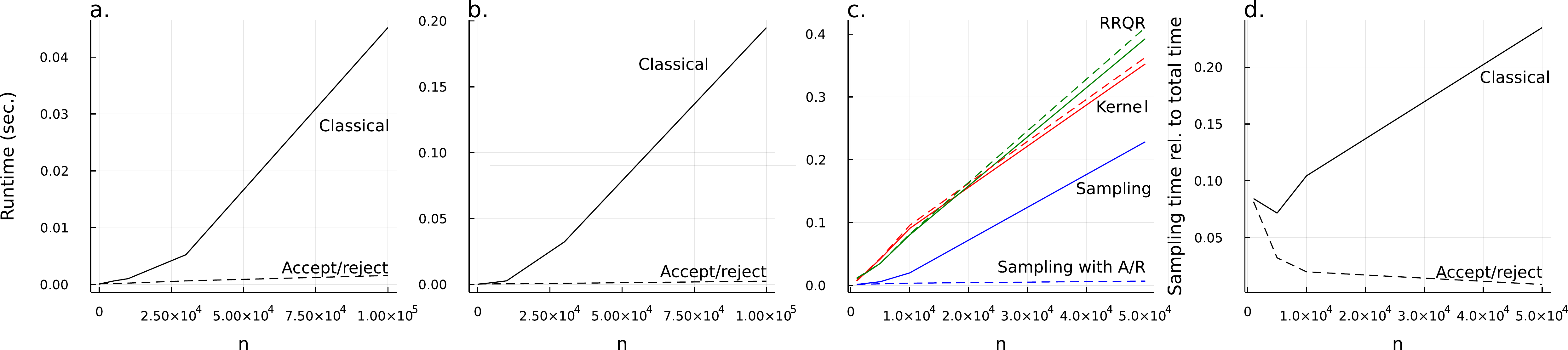}
	\caption{\emph{Left panels}: median time needed to sample a projection DPP,
		using the standard approach (Alg.~\ref{alg:sampling-proj}) vs. A/R (Alg.~\ref{alg:sampling-proj-rejection}). For two values of $m$: a) $m=30$, b) $m=60$. Note
		that here the time taken to compute an orthogonal basis is not taken into
		account (see text). \emph{Right panels}: simulation of a full workload that includes feature
		generation and orthogonalisation (see text). c) time taken by each step in
		the computation as a function of $n$. Solid lines are runtimes when sampling
		using the classical method, dashed, when using A/R. Note that the first two
		steps (labeled ``kernel'' and ``RRQR'', in red and green respectively) are identical. A/R sampling becomes
		beneficial at around $n=1,000$ and is orders of magnitude faster at
		$n=100,000$. d) Sampling time as a function of total computation time. }
	\label{fig:results_runtime}
\end{figure*}

We compare the Accept/Reject algorithm (Alg.~\ref{alg:sampling-proj-rejection}) to its classical counterpart (Alg.~\ref{alg:sampling-proj}) for
different values of $n$ and $m$. Both algorithms are implemented in the Julia
language and are publicly available\footnote{we’ve added a folder in our DPP.jl
  repository containing the code necessary to reproduce the figures, available
  here: \url{https://github.com/dahtah/DPP.jl/tree/main/misc/sampling_paper}. In
addition, the A/R sampler is available as part of the DPP.jl software
package, \url{https://github.com/dahtah/DPP.jl/}.}. For each value of $n$, we
sample a random projection matrix of size $n \times m$ (via QR decomposition of
a matrix with Gaussian entries). We then pre-compute the leverage scores, and
run each algorithm 100 times. Fig. \ref{fig:results_runtime} a) and b) show the measured 
median runtimes. All tests are run on a 2017 Linux
laptop with i7-8550U Intel CPU and 8 Go of RAM.

For very small values of $n$, the classical algorithm is faster, which can be
explained by the efficiency of BLAS calls. At each step the whole conditional
distribution is computed, the main cost being a matrix multiplication (i.e., a
BLAS call), which benefits from efficient multithreaded code. However, the
different asymptotic scalings ($\O(nm^2)$ vs. $\O(nm)$) soon makes the classical
algorithm uncompetitive. The cross-over point in our simulations is at around
$n=1,000$, and by $n=10,000$ the difference is stark. This illustrates the ``very significant speed-up'' 
scenario described in the introduction: $p=m$ and orthogonalisation is already computed. We now move on to illustrate a less favorable case related to the ``moderate speed-up'' 
 scenario of the introduction: $p$ equals a few times $m$. 

In many cases, sampling a projection DPP is only one of the steps in a process
that involves feature computation and orthogonalisation. For instance, in
\cite{Tremblay:DPPforCoresets}, a DPP based on the Gaussian kernel is used to
produce a subset of the data suitable for running clustering algorithms.
Starting from $n$ points, $\bx_1,\ldots,\bx_n$ in $\R^d$ they use a DPP with
L-ensemble given by $L_{ij} = \gamma \kappa(\vect{x},\vect{y}) = \gamma \exp \left( - \frac{1}{\sigma^2} \norm{\vect{x}-\vect{y}}^2 \right)$, 
where $\gamma$ is a tuning parameter that determines the expected size.
Producing a sample from this exact DPP requires the eigendecomposition of $\bL$
which is impractical; however, $\bL$ is numerically low-rank for relevant values
of $\sigma$ and this can be exploited. In \cite{Tremblay:DPPforCoresets} a low-rank approximation of $\bL$ is used, based on random Fourier features
\cite{rahimi_random_2008} followed by a SVD, bringing down the total cost to
$\O(nm^2)$. We now sketch (without any formal justification) another procedure
which gives comparable results at lower cost.

Gaussian kernel matrices have rapidly decaying spectra (see, \textit{e.g.},
\cite{wathen2015eigenvalues}), which implies in particular that DPPs
sampled from a Gaussian L-ensemble are well approximated by projection DPPs with
kernels $\bP_m= \bU_m\bU_m^\top$ where $\bP_m$ projects onto the dominant
eigenspace of $\bL$ of order $m$. Thus, all we need is a good basis for
the dominant eigenspace. Methods from randomised linear algebra offer good
practical tools (``range finders'') to obtain a basis for such a space~\cite{martinsson2020randomized}. For these simulations, we used the following
approximation:
\begin{enumerate}
	\item Select (and compute)  $5m$ columns uniformly from $\bL$. Call this matrix $\bA$.
	We call this the ``kernel step''. 
	\item Use Rank-Revealing QR (RRQR, \cite{chan1987rank}) and random
	projections, as implemented in the Julia package LowRankApprox.jl, to produce
	$\bQ$, an orthonormal matrix of size $n \times m$ that approximates the image
	of $\bA$. We call this the ``RRQR step''.
	\item Sample a DPP with projection kernel $\bQ\bQ^\top$ using either the
	classical or the A/R algorithm. 
\end{enumerate}
We set $m=100$ and time each step. This results in a total
runtime of around 1.2 sec. at $n=10^5$ with the A/R sampler, which challenges
the notion that DPPs are very slow to sample from. With this procedure, the time
spent sampling the actual DPP goes up to ~20\% of total time for the classical
algorithm at $n=10^5$, but using the A/R sampler sampling time becomes
negligible. See Fig.~\ref{fig:results_runtime} c) and d) to see how these times vary with $n$. 
This indicates that for some computations the implementation effort
may be better allocated to speeding up the linear algebra and
feature computation part rather than the sampling part. In this particular
instance, step (1) at least could be sped up by exploiting
parallelism, or the GPU, which we did not attempt.

\section{Discussion and perspectives}
\label{sec:m-vs-mlogm}

On top of the improvement on the sampling time of DPPs, our results imply the following intriguing by-product. Let $\X$ be a projection DPP of size $m$. A set of $\O(m \log m)$ points sampled i.i.d. from the inclusion probability distribution $p(i \in \X)=\pione(i)/m$ (also known as \emph{leverage scores}) contains with high probability a realisation from the DPP. 
 The consequences of this fact are worth discussing.
First, let us put the result a bit more formally. 

\begin{definition}
	Let $\X$ be a DPP on $\Omega$ and $\Y\subseteq\Omega$. We call $\Phi(\Y)$ a \emph{thinning algorithm} if it returns a subset of $\Y$. Moreover, 
	we say $\Phi(\Y)$ is \emph{successful} when it  returns a realisation from $\X$. 
\end{definition}

\begin{corollary}
	\label{cor:guaranteed}
	Let $\X$ be a projection DPP of size $m$, and $\Y$ be a set of i.i.d. points sampled with replacement with probability proportional to the leverage scores: $p(i \in \X)=\pione(i)/m$. Let $\delta\in(0,1/2)$. A simple modification of Alg.~\ref{alg:sampling-proj-rejection} gives a thinning algorithm $\Phi$ that verifies: $\Phi(\Y)$ is successful with probability greater than $1-\delta$ provided that $|\Y|\geq 2 m \log m + 3m\log\left(\frac{1}{\delta}\right)$.
\end{corollary}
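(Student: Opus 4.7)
The plan is to exhibit the thinning algorithm $\Phi$ explicitly as a ``frozen-proposals'' version of Alg.~\ref{alg:sampling-proj-rejection}, and then invoke the high-probability bound on the total number of proposals already established in Theorem~\ref{thm:runtime}.

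First, I would define $\Phi(\Y)$ as follows. Assume $\Y = (y_1, y_2, \ldots, y_N)$ is an ordered i.i.d.\ sequence from the leverage score distribution $\pione(i)/m$ (if $\Y$ is given as a multiset, order it uniformly at random). Now run Alg.~\ref{alg:sampling-proj-rejection} verbatim, except that whenever the inner \emph{while} loop demands a new proposal drawn from $\pione$ via Walker's alias method, we instead consume the next unused element of $\Y$. The algorithm halts successfully if it manages to accept $m$ elements before exhausting $\Y$, in which case $\Phi(\Y)$ outputs the accepted set $\X$; otherwise we declare failure.

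Next, I would argue that conditional on the successful branch, $\Phi(\Y)$ is distributed exactly as a projection DPP. The point is that in Alg.~\ref{alg:sampling-proj-rejection} the proposals are i.i.d.\ draws from $\pione$, independent of everything else; the acceptance test $r = \pi^{(t)}(x)/\pione(x)$ depends only on the current proposal and the previously accepted elements. Consequently, feeding the pre-sampled sequence $\Y$ into the very same acceptance loop yields the same joint law on accepted items as the original algorithm, provided we don't run out of proposals. (Repeated values in $\Y$ cause no issue since $\pi^{(t)}(x)=0$ for any $x$ already in $\X$, so duplicates are automatically rejected.)

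Finally, I would quantify the ``provided we don't run out'' clause. Let $R = \sum_{t=1}^m R_t$ be the total number of proposals consumed by the rejection sampler, as in Theorem~\ref{thm:runtime}. Under the coupling above, failure of $\Phi(\Y)$ occurs exactly on the event $\{R > N\}$ where $N = |\Y|$. The concentration inequality in Theorem~\ref{thm:runtime} already states that, for any $\delta \in (0,1/2)$,
\begin{equation*}
\Pbb\!\left(R \leq 2 m \log m + 3 m \log \tfrac{1}{\delta}\right) \geq 1 - \delta.
\end{equation*}
Choosing $N \geq 2 m \log m + 3 m \log(1/\delta)$ thus guarantees success with probability at least $1-\delta$, which is the claimed statement.

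The only genuinely subtle point is the coupling in the second paragraph: one must check that ``draw proposals on demand from $\pione$'' and ``pre-sample a long i.i.d.\ sequence from $\pione$ and read it off sequentially'' yield identically distributed executions. This is immediate from the independence of the i.i.d.\ proposals from the acceptance decisions, but it deserves to be spelled out so that the event $\{\Phi(\Y)\text{ succeeds}\}$ is visibly the same as $\{R \leq N\}$ under a common probability space. Once this coupling is in place, the result is a direct corollary of Theorem~\ref{thm:runtime} with no further computation needed.
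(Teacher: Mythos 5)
Your proposal is correct and follows essentially the same route as the paper: define $\Phi$ by replacing the alias-method draws in Alg.~\ref{alg:sampling-proj-rejection} with sequential consumption of the pre-sampled i.i.d.\ set $\Y$, identify the failure event with $\{R>|\Y|\}$, and apply the tail bound on $R$ from Theorem~\ref{thm:runtime}. Your explicit treatment of the coupling (and of duplicates, which are automatically rejected since $\pi^{(t)}(x)=0$ for selected $x$) is slightly more careful than the paper's one-line version but adds nothing substantively different.
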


\begin{proof}
	Let $\Y$ be drawn i.i.d. with replacement from the leverage score distribution $\pione$. 
	$\Phi$ is the following simple modification of Alg.~\ref{alg:sampling-proj-rejection}. Instead of drawing a new proposal $x$ using the alias method at the beginning of the \emph{while} loop as in Alg.~\ref{alg:sampling-proj-rejection}, draw uniformly and without replacement from $\Y$. If $\Phi$ finishes before emptying $\Y$, then it is successful. If $\Y$ is empty and $\Phi$ is not terminated, then it fails. 
	The probability that $\Phi$ succeeds is thus equal to the probability that $|\Y|$
    is larger than the number of proposals $R$ of Alg.~\eqref{alg:sampling-proj-rejection}. 
    By Thm~\ref{thm:runtime}, setting $|\Y|= 2 m \log m + 3m\log\left(\frac{1}{\delta}\right)$ yields $p(|\Y|\geq R)\leq 1-\delta$ and ends the proof.
\end{proof}
%
Note that this is a substantial improvement over the work of \cite{derezinski2019exact}, which gives this result only for  $|\Y| \geq \O(m^2)$ i.i.d. points. 

A natural question is to ask if this result is optimal: can we find a thinning algorithm that succeeds with high probability for even smaller i.i.d sets? The answer is no  in general: 
\begin{proposition}
	Corollary~\ref{cor:guaranteed} is optimal in the following sense. 
	Let $\X$ and $\Y$ be as previously. There does not exist a generic thinning algorithm able to succeed with fixed non-null probability if  $|\Y| = o(m \log m)$.
\end{proposition}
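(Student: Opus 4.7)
The plan is to exhibit a single projection DPP on which every thinning algorithm must fail with probability approaching $1$ when $|\Y|=o(m\log m)$, thereby showing the bound of Corollary~\ref{cor:guaranteed} cannot be improved in general. The natural candidate is the degenerate projection DPP on ground set $\Omega=\{1,\ldots,m\}$ with kernel $\bK=\bI_m$. This is a rank-$m$ projection matrix, so it defines a valid projection DPP of size $m$; moreover the DPP is deterministic, with $\X=\Omega$ almost surely, and its inclusion probabilities are uniform: $\pione(i)/m = K_{ii}/m = 1/m$ for every $i\in\Omega$. Hence the i.i.d.\ sample $\Y$ from the leverage score distribution reduces to $|\Y|$ uniform draws with replacement from $\Omega$.

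The second step is the reduction to coupon collector. Any thinning algorithm $\Phi$ returns a subset of $\Y$, so to be successful on this DPP it must output $\Omega$ itself; in particular $\Y\supseteq\Omega$ is necessary. Therefore, uniformly over $\Phi$, the success probability is upper bounded by $P(\Y\supseteq\Omega)$, i.e.\ the probability of collecting all $m$ coupons in $|\Y|$ uniform trials.

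The final step is to invoke the classical coupon collector lower bound: for any sequence $t_m=o(m\log m)$, $P(\text{all }m\text{ coupons collected in }t_m\text{ trials})\to 0$ as $m\to\infty$. One concrete derivation uses the Bonferroni inequality on the complementary event $P(\Y\not\supseteq\Omega) \geq m(1-1/m)^{t_m} - \tbinom{m}{2}(1-2/m)^{t_m}$; writing $t_m = m\log m - \omega(m)m$ with $\omega(m)\to\infty$ shows the leading term tends to infinity while the second is of lower order, so $P(\Y\supseteq\Omega)\to 0$. Equivalently one can cite the sharp concentration of the coupon collector time around $m\log m$. Combining the three steps, no generic thinning algorithm can succeed on $\bK=\bI_m$ with fixed non-null probability when $|\Y|=o(m\log m)$.

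The only nontrivial ingredient is the coupon collector lower bound, which is standard; the real design step is the choice of extremal DPP. The identity kernel is the right example precisely because it collapses all of the sampling difficulty onto the combinatorial coverage problem, leaving no room for $\Phi$ to exploit any structure in the DPP itself.
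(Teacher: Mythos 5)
Your approach is essentially the paper's: both proofs exhibit a stratified-sampling-type projection DPP with uniform leverage scores and reduce success to the coupon collector's coverage event. Your kernel $\bK=\bI_m$ on a ground set of size $n=m$ is just the degenerate case of the paper's construction (segments of size one), and the reduction step --- any thinning algorithm returns a subset of $\Y$, hence must see every element of the deterministic realisation --- is identical in substance to the paper's observation that $\Y$ must hit every stratum.

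The one genuine flaw is in your ``concrete derivation'' of the coupon-collector lower bound. With $t_m=m\log m-\omega(m)m$, the two-term Bonferroni bound reads $m(1-1/m)^{t_m}-\tbinom{m}{2}(1-2/m)^{t_m}\approx e^{\omega}-\tfrac{1}{2}e^{2\omega}$, so the \emph{second} term dominates the first (it is of higher order, not lower), and the bound is negative, hence vacuous, precisely in the regime you need. To repair this along your lines you would use the second-moment (Paley--Zygmund) method, $p(N>0)\geq(\E N)^2/\E(N^2)$ with $N$ the number of uncollected coupons, which does give $p(\Y\not\supseteq\Omega)\to1$. Your fallback --- citing the sharp concentration of the coupon collector time around $m\log m$ --- is valid and rescues the argument, but as written the self-contained derivation does not go through. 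The paper avoids this issue entirely with a short Chernoff-type bound: writing $T=\sum T_i$ as a sum of independent geometrics and applying Markov's inequality to $\exp(-sT)$ with $s=1/m$ yields $p(T<l)\leq e^{l/m}/(m+1)$, which is $o(1)$ exactly when $l=o(m\log m)$; this is both elementary and gives the claim for every sequence $l(m)=o(m\log m)$ without case analysis.
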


\begin{proof}
	We show the proposition by exhibiting a type of DPP for which there does not exist a thinnning algorithm that succeeds with a non-null probability if $|\Y|$ is asymptotically smaller than $m\log m$.
	
	It is well-known in the folklore that a form of \emph{stratified sampling} is a special case of projection DPPs. In stratified sampling, we partition the ground set $\Omega$ into $m$ classes, and sample an item uniformly from each segment of the partition. To simplify the argument, assume $\Omega$ can be cut into $m$ subsets of equal size, and define vector $\vect{e}_j$ as the (normalised) indicator of segment $j$, i.e. $\vect{e}_j(i) = \sqrt{\frac{n}{m}}$  if item $i$ is in segment $j$ and $0$ otherwise. Let $\matr{E} = \left[ \vect{e}_1 \ldots \vect{e}_m \right]$. Then it is easy to show that stratified sampling is equivalent to a DPP $\X$ with marginal kernel $\bK = \matr{E}\matr{E}^\top$. Since $\matr{E}^\top\matr{E} = \bI$, the DPP in question is a projection DPP. 
	
	Because of the nature of stratified sampling, we know that $\X$ contains a point from each one of the segments, and that $p(i \in \X) = \frac{m}{n}$ for all $i$. Now in order for any thinning algorithm to produce a stratified sample, the i.i.d. sample $\Y$ needs to contain at least one point from each segment. Let $l \eqdef |\Y|$: how large does $l$ need to be so that $\Y$ contains at least one point from each segment with probability at least $\alpha$ ($\alpha>0$ fixed)? This is an instance of the coupon collector's problem. Assume that at each time $t$ we add a ball to one of $m$ urns with equal probability, and call $T$ the smallest $t$ such all urns have at least one ball. We show that for any $l(m) = o(m\log m)$, $\lim_{m \to \infty} p(T < l(m) ) = 0$. 
	
	To do this, we need an upper bound for $p(T < l(m))$. One could work with results from~\cite{witt2014fitness} for instance. However, we prefer an elegant line of proof inspired by a contribution of a stackexchange user called ``cardinal''\footnote{see \url{https://stats.stackexchange.com/q/7774}}. $T$ can be viewed as a sum of $m$ geometric variables: $T= \sum_{i=1}^m T_i$, where $T_i$ is the time at which $i$ urns have at least one ball. 
	All the $T_i$'s are independent geometric random variables with success probability $p_i = 1-\frac{i-1}{m}$. Indeed, the same representation is obtained by considering Alg.~\ref{alg:sampling-proj-rejection} in the special case of stratified sampling (each iteration fills one urn).
	Now, Markov's inequality gives:
	\begin{align*}
	\forall s>0,\qquad	p(T < l) &= p(\exp(-sT) > \exp(-sl)) \\
		&\leq \exp(sl) \E \left(\exp(-sT)\right)
	\end{align*}
	Since  $T$ is a sum of independent geometric variables, $\E(\exp(-sT))$ is easy to compute\footnote{Using $E\left(\exp^{-sT}\right) = \prod_{j=1}^m E\left(\exp^{-sT_j}\right) = \prod_{j=1}^m \frac{p_j\exp^{-s}}{1-(1-p_j)\exp^{-s}}$ and changing variable $i \leftarrow m-j+1$}:
	\[ \E(\exp(-sT)) = \prod_{i=1}^m \frac{i}{m(e^s - 1) + i}\]
	Picking $s = \frac{1}{m}$, we obtain:
	\[ p(T < l)  \leq \exp^\frac{l}{m} \prod_{i=1}^m \frac{i}{m(e^{1/m} - 1) + i} \]
	Since $e^{\frac{1}{m}} \geq 1+\frac{1}{m}$, we upper bound the right-hand side to:
	$$p(T < l)  \leq \exp^\frac{l}{m} \prod_{i=1}^m \frac{i}{1 + i} = \frac{\exp(\frac{l}{m})}{m+1}$$
	Thus, any choice of sample size $l(m)$ such that $\frac{\exp(\frac{l(m)}{m})}{m+1}$ goes to 0 in the limit is asymptotically too small (the probability of success goes to 0). Noting that $\frac{\exp(\frac{l}{m})}{m+1} = o(1)$ is equivalent to $l(m) = o(m \log m)$ yields the claim.
\end{proof}

This transition occuring at $\O(m\log m)$ calls for discussion, and paves the way to future interesting lines of research.
First of all, Corollary~\ref{cor:guaranteed} shows, from an original angle, that
the repulsiveness of DPPs is weak. Indeed, other repulsive processes such as hard-core processes cannot verify such property in all generality. For instance, in the high density limit of a hard-sphere model, the probability that the position of $m$ non-overlapping spheres can be found within a set of only $\O(m \log m)$ iid points drawn uniformly, tends to $0$. 
In addition, these results ask the following question: in what cases should one pay the extra cost of sampling $m$ elements from a DPP, rather than simply sampling $\O(m\log m)$ elements i.i.d. from the leverage score distribution? Of course, when the objective is to sample a diverse set, such as in search engines, it is always worthwhile to sample the DPP. However, in the case of integration~\cite{bardenet2016monte, coeurjolly2020monte}; or in the case of coresets~\cite{Tremblay:DPPforCoresets}, the answer is not so clear and requires further investigation.

\subsubsection*{Acknowledgements}
We thank the five anonymous reviewers for their helpful comments that led to an improved version of this manuscript. 
This work was partially supported by the ANR project GRANOLA (ANR-21-CE48-0009), as well as the LabEx PERSYVAL-Lab (ANR-11-LABX-0025-01) and MIAI@Grenoble Alpes (ANR-19-P3IA-0003).

\bibliography{biblio}
\end{document}